\documentclass[runningheads]{llncs}
\usepackage{graphicx}
\usepackage{comment}
\usepackage{amsmath,amssymb} 
\usepackage{color}


\usepackage{epsfig}
\usepackage{graphicx}
\usepackage{mathtools}

\usepackage{multirow}
\usepackage{xcolor}
\usepackage{siunitx} 
\usepackage{booktabs} 
\usepackage{soul}

\usepackage[linesnumbered,ruled,vlined]{algorithm2e}
\SetKwInput{KwInput}{Input}                
\SetKwInput{KwOutput}{Output}              
\DeclareMathOperator*{\argmin}{arg\,min}
\DeclarePairedDelimiter\abs{\lvert}{\rvert}
\newtheorem{lemm}{Lemma}

\usepackage{pifont}

\begin{document}
\pagestyle{headings}
\mainmatter

\title{Post-Training Piecewise Linear Quantization \\ for Deep Neural Networks} 

\titlerunning{Post-Training Piecewise Linear Quantization for Deep Neural Networks}
%

\author{Jun Fang\inst{1} \and
Ali Shafiee\inst{1} \and
Hamzah Abdel-Aziz\inst{1} \and
David Thorsley\inst{1} \and \\ 
Georgios Georgiadis\inst{2}\thanks{Work done at Samsung Semiconductor, Inc.} \and
Joseph Hassoun\inst{1}
}

\authorrunning{J. Fang et al.}
%

\institute{Samsung Semiconductor, Inc. \\
\email{\{jun.fang, ali.shafiee, hamzah.a, d.thorsley, j.hassoun\}@samsung.com} 
\and Microsoft \quad  \email{gegeo@microsoft.com}
}

\maketitle

\begin{abstract}
Quantization plays an important role in the energy-efficient deployment of deep neural networks on resource-limited devices. Post-training quantization is highly desirable since it does not require retraining or access to the full training dataset. The well-established uniform scheme for post-training quantization achieves satisfactory results by converting neural networks from full-precision to 8-bit fixed-point integers. However, it suffers from significant performance degradation when quantizing to lower bit-widths.
In this paper, we propose a \textbf{p}iece\textbf{w}ise \textbf{l}inear \textbf{q}uantization (PWLQ) scheme to enable accurate approximation for tensor values that have bell-shaped distributions with long tails. Our approach breaks the entire quantization range into non-overlapping regions for each tensor, with each region being assigned an equal number of quantization levels. Optimal breakpoints that divide the entire range are found by minimizing the quantization error. Compared to state-of-the-art post-training quantization methods, experimental results show that our proposed method achieves superior performance on image classification, semantic segmentation, and object detection with minor overhead. 
\keywords{deep neural networks, post-training quantization, piecewise linear quantization}
\end{abstract}

\section{Introduction}\label{S:intro}

In recent years, deep neural networks (DNNs) have achieved state-of-the-art results in a variety of learning tasks including image classification \cite{hu2018squeeze,huang2017densely,szegedy2016rethinking,he2016deep,simonyan2014very,krizhevsky2012imagenet}, segmentation \cite{chen2018encoder,he2017mask,ronneberger2015u} and detection \cite{liu2016ssd,redmon2017yolo9000,ren2015faster}. 
Scaling up DNNs by one or all of the dimensions \cite{tan2019efficientnet} of network depth \cite{he2016deep}, width \cite{zagoruyko2016wide} or image resolution \cite{lai2018fast} attains better accuracy, at a cost of higher computational complexity and increased memory requirements, which makes the deployment of these networks on embedded devices with limited resources impractical.

One feasible way to deploy DNNs on embedded systems is quantization of full-precision (32-bit floating-point, FP32) weights and activations to lower precision (such as 8-bit fixed-point, INT8) integers \cite{jacob2018quantization}. By decreasing the bit-width, the number of discrete values is reduced, while the quantization error, which generally correlates with model performance degradation increases. To minimize the quantization error and maintain the performance of a full-precision model, many recent studies \cite{zhou2017incremental,cai2017deep,micikevicius2017mixed,jacob2018quantization,choi2018pact,zhang2018lq,faraone2018syq,jung2019learning} rely on training either from scratch (``quantization-aware" training) or by fine-tuning a pre-trained FP32 model.

However, post-training quantization is highly desirable since it does not require retraining or access to the full training dataset. It saves time-consuming fine-tuning effort, protects data privacy, and allows for easy and fast deployment of DNN applications. 
Among various post-training quantization schemes proposed in the literature \cite{krishnamoorthi2018quantizing,choukroun2019low,zhao2019improving}, uniform quantization is the most popular approach to quantize weights and activations since it discretizes the domain of values to evenly-spaced low-precision integers which can be efficiently implemented on commodity hardware's integer-arithmetic units.

Recent work \cite{krishnamoorthi2018quantizing,lee2018quantization,nagel2019data} shows that post-training quantization based on a uniform scheme with INT8 is sufficient to preserve near original FP32 pre-trained model performance for a wide variety of DNNs. However, ubiquitous usage of DNNs in resource-constrained settings requires even lower bit-width to achieve higher energy efficiency and smaller models. 
In lower bit-width scenarios, such as 4-bit, post-training uniform quantization causes significant accuracy drop \cite{krishnamoorthi2018quantizing,zhao2019improving}. This is mainly because the distributions of weights and activations of pre-trained DNNs is bell-shaped such as Gaussian or Laplacian \cite{han2015deep,lin2016fixed}. 
That is, most of the weights are clustered around zero while few of them are spread in a long tail. As a result, when operating at low bit-widths, uniform quantization assigns too few quantization levels to small magnitudes and too many to large ones, which leads to significant accuracy degradation \cite{krishnamoorthi2018quantizing,zhao2019improving}.

To mitigate this issue, various quantization schemes
\cite{miyashita2016convolutional,cai2017deep,baskin2018uniq,park2018value,jain2019biscaled,li2020additive} are designed to take advantage of the fact that weights and activations of pre-trained DNNs typically have bell-shaped distributions with long tails. Here, we present a new number representation via a piecewise linear approximation to be suited for these phenomena. It breaks the entire quantization range into \textit{non-overlapping regions} where each region is assigned an equal number of quantization levels. 
Although our method works with an arbitrary number of regions, we suggest limiting them to two to simplify the complexity of the proposed approach and the hardware overhead.
The \textit{optimal breakpoints} that divide the entire range can be found by minimizing the quantization error. Compared to uniform quantization, our piecewise linear quantization (PWLQ) provides a richer representation that reduces the quantization error. This indicates its potential to reduce the gap between floating-point and low-bit precision models. It is also more hardware-friendly when compared to other non-linear approaches such as logarithm-based and clustering-based approaches~\cite{miyashita2016convolutional,ullrich2017soft,baskin2018uniq}, since in our method, computation can still be carried out without the need of any transforms or look-up tables.

The main contributions of our work are as follows:
\begin{itemize} 
    \item[$\bullet$] We propose a piecewise linear quantization (PWLQ) scheme for efficient deployment of pre-trained DNNs without retraining or access to the full training dataset. We also investigate its impact on hardware implementation.
   
    \item[$\bullet$] {We present a solution to find the optimal breakpoints and demonstrate that our method achieves a lower quantization error than the uniform scheme.} 
    
    \item[$\bullet$] We provide a comprehensive evaluation on image classification, semantic segmentation, and object detection benchmarks and show that our proposed method achieves state-of-the-art results. 

\end{itemize}
\section{Related Work}\label{S:relatedwork}

There is a wide variety of approaches in the literature that facilitate the efficient deployment of DNNs. The first group of techniques relies on designing network architectures that depend on more efficient building blocks.
Notable examples include depth/point-wise layers \cite{howard2017mobilenets,sandler2018mobilenetv2} as well as group convolutions \cite{zhang2018shufflenet,ma2018shufflenet}. These methods require domain knowledge, training from scratch and full access to the task datasets. 
The second group of approaches optimizes network architectures in a typical task-agnostic fashion and may or may not require (re)training. Weight pruning \cite{han2015deep,li2016pruning,he2017channel,luo2017thinet}, activation compression \cite{dong2017more,dhillon2018stochastic,georgiadis2019accelerating}, knowledge distillation \cite{hinton2015distilling,polino2018model} and quantization \cite{courbariaux2015binaryconnect,rastegari2016xnor,zhu2016trained,zhou2016dorefa,miyashita2016convolutional,jacob2018quantization} fall under this category.

In particular, quantization of activations and weights \cite{gong2014compressing,gupta2015deep,wu2016quantized,lin2016fixed,choi2018pact,zhang2018lq,zhao2019improving} leads to model compression and acceleration as well as to overall savings in power consumption. Model parameters can be stored in a fewer number of bits while the computation can be executed on integer-arithmetic units rather than on power-hungry floating-point ones~\cite{jacob2018quantization}.
There has been extensive research on quantization with and without (re)training. In the rest of this section, we focus on post-training quantization that directly converts full-precision pre-trained models to their low-precision counterparts.

Recent works~\cite{krishnamoorthi2018quantizing,lee2018quantization,nagel2019data} have demonstrated that 8-bit quantized models have been able to accomplish negligible accuracy loss for a variety of networks. To improve accuracy, per-channel (or channel-wise) quantization is introduced in \cite{krishnamoorthi2018quantizing,lee2018quantization} to address variations of the range of weight values across channels. Weight equalization/factorization is applied by \cite{meller2019same,nagel2019data} to rescale the difference of weight ranges between different layers. In addition, bias shifts in the mean and variance of quantized values are observed and counteracting methods are suggested by~\cite{banner2018post,finkelstein2019fighting}. A comprehensive evaluation of clipping techniques is presented by \cite{zhao2019improving} along with an outlier channel splitting method to improve quantization performance. Moreover, adaptive processes of assigning different bit-width for each layer are proposed in \cite{lin2016fixed,zhou2018adaptive} to optimize the overall bit allocation.

There are also a few attempts to tackle 4-bit post-training quantization by combining multiple techniques. In \cite{banner2018post}, a combination of analytical clipping, bit allocation, and bias correction is used, while \cite{choukroun2019low} minimizes the mean squared quantization error by representing one tensor with one or multiple 4-bit tensors as well as by optimizing the scaling factors.

Most of the aforementioned works utilize a linear or uniform quantization scheme. However, linear quantization cannot capture the bell-shaped distribution of weights and activations, which results in sub-optimal solutions. 
To overcome this deficiency, \cite{baskin2018uniq} proposes a quantile-based method to improve accuracy but their method works efficiently only on highly customized hardware; \cite{jain2019biscaled} employs two different scale factors on overlapping regions to reduce computation bits over fixed-point implementations. However, its scale factors restricted to powers of two and heuristic options limit the accuracy performance.
Instead, we propose a piecewise linear approach that improves over the selection of optimal breakpoints that leads to state-of-the-art quantized model results. Our method can be implemented efficiently with minimal modification to commodity hardware.
\section{Quantization Schemes}\label{S:q_scheme}

In this section, we review a uniform quantization scheme and discuss its limitations. We then present PWLQ, our piecewise linear quantization scheme and show that it has a stronger representational power (a smaller quantization error) compared to the uniform scheme. 
\begin{figure}
\begin{center}
   \includegraphics[width=0.95\linewidth]{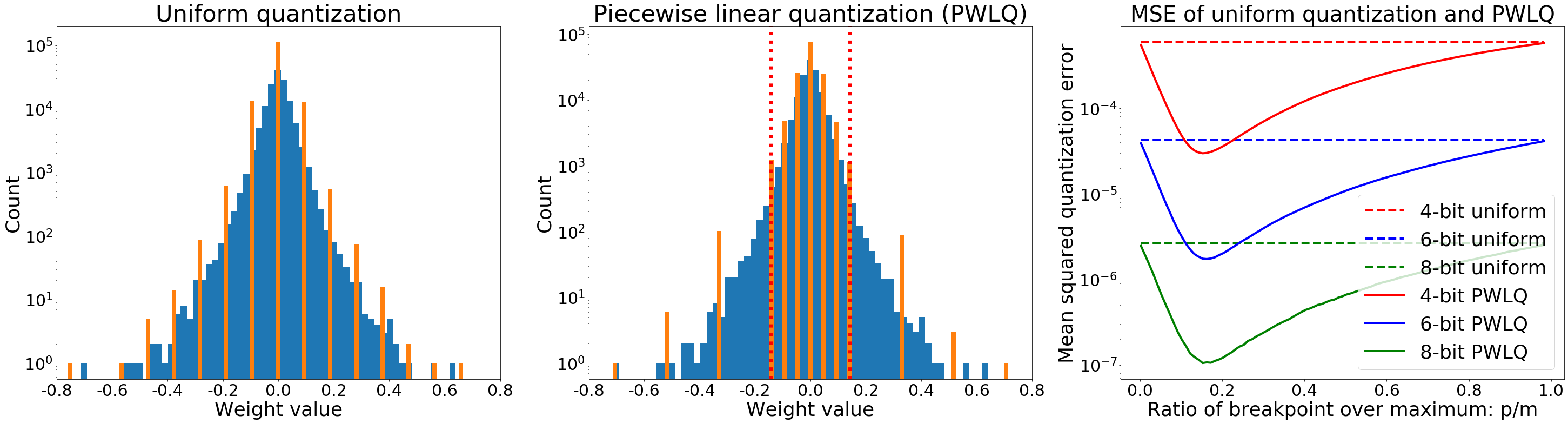}
\end{center}
   \caption{
   Quantization of $conv4$ layer weights in a pre-trained Inception-v3. Left: uniform quantization. Middle: piecewise linear quantization (PWLQ) with one breakpoint, dotted line indicates the breakpoint. Right: Mean squared quantization error (MSE) for various bit-widths ($b=4, 6, 8$). MSE of PWLQ is convex $w.r.t.$ the breakpoint $p$, the $b$-bit PWLQ can achieve a smaller quantization error than the $b$-bit uniform scheme
}
\vspace{-1mm}
\label{fig:pw-uniform-compare}
\end{figure}

\subsection{Uniform Quantization}

Uniform quantization (the left of Figure \ref{fig:pw-uniform-compare}) linearly maps full-precision real numbers $r$ into low-precision integer representations. 
From \cite{jacob2018quantization,choukroun2019low}, the approximated version $\hat{r}$ from uniform quantization scheme at $b$-bit can be defined as: 
\begin{equation} \label{eq:uniform-quant-scheme}
\begin{array}{ll}
&\hat{r} =  \text{uni} (r; b, r_{l}, r_{u}, z)  =  s \times r_q + z, \vspace{0.5mm}\\
& r_q = \Bigl\lceil \frac{\text{clamp} (r; r_{l}, r_{u}) - z}{s} \Bigr\rfloor_{\mathbb{Z}_b}, \vspace{0.5mm}\\
& \text{clamp}(r; r_{l}, r_{u}) = \min( \max(r, r_{u}), r_{l}), \vspace{0.5mm}\\ 
& s = \frac{\Delta}{N-1}, \quad \Delta = r_{u} - r_{l}, \quad N=2^b, 
\end{array}
\end{equation}
where $[r_l, r_u]$ is the quantization range, $s$ is the scaling factor, $z$ is the offset, $N$ is the number of quantization levels, $r_q$ is the quantized integer computed by a rounding function $\lceil \cdot \rfloor_{\mathbb{Z}_b}$ followed by saturation to the integer domain $\mathbb{Z}_b$. We set the offset $z=0$ for symmetric signed distributions combined with $\mathbb{Z}_b = \lbrace -2^{b-1}, ..., 2^{b-1} -1 \rbrace$ and $z=r_l$ for asymmetric unsigned distributions (e.g., ReLU-based activations) with $\mathbb{Z}_b = \lbrace 0, ..., 2^{b} -1 \rbrace$. 
Since the scheme \eqref{eq:uniform-quant-scheme} introduces a quantization error defined as $\varepsilon_{uni} = \hat{r} - r$, the expected quantization error squared is given by:
\begin{equation} \label{eq:uni-quant-err}
\mathbb{E} (\varepsilon_{uni}^2; b, r_{l}, r_{u}) = \frac{s^2}{12} = C(b) \Delta^2, 
\end{equation}
with $C(b) = \frac{1}{12 (2^b-1)^2}$ under uniform distributions~\cite{you2010audio}.

From the above definition, uniform quantization divides the range evenly despite the distribution of $r$. Empirically, the distributions of weights and activations of pre-trained DNNs are similar to bell-shaped Gaussian or Laplacian~\cite{han2015deep,lin2016fixed}. Therefore, uniform quantization is not always able to achieve small enough approximation error to maintain model accuracy, especially in low-bit cases.

\subsection{Piecewise Linear Quantization (PWLQ)}\label{S:PW_def}

To improve model accuracy for quantized models, we need to approximate the original model as accurately as possible by minimizing the quantization error. We follow this natural criterion to investigate the quantization performance, even though no direct relationship can easily be established between the quantization error and the final model accuracy~\cite{choukroun2019low}.

Inspired from \cite{park2018value,jain2019biscaled} that takes advantage of bell-shaped distributions, our approach based on piecewise linear quantization is designed to minimize the quantization error. It breaks the quantization range into two non-overlapping regions: the dense, central region and the sparse, high-magnitude region. An equal number of quantization levels $N=2^b$ is assigned to these two regions. We chose to use two regions with one breakpoint to maintain simplicity in the inference algorithm (Section \ref{sec:ablation-study}) and the hardware implementation (Section \ref{S:HW_impact}). Multiple-region cases are discussed in Section \ref{sec:ablation-study}.

Therefore, we only consider one breakpoint $p$ to divide the quantization range\footnote{Here we consider symmetric quantization range $[-m, m]$ ($m > 0$) for simplicity, it is extendable to asymmetric ranges $[m_1, m_2]$ for any real numbers $m_1 < m_2$.} $[-m, m]$ ($m > 0$) into two symmetric regions: the center region $R_1 = [-p,p]$ and the tail region $R_2 = [-m,-p) \cup (p,m]$. 
Each region consists of a negative piece and a positive piece. Within each of the four pieces, $(b-1)$-bit ($b \geq 2$) uniform quantization \eqref{eq:uniform-quant-scheme} is applied such that including the sign every value in the quantization range is being represented into $b$-bit. We define the $b$-bit piecewise linear quantization (denoted by PWLQ) scheme as:
\begin{equation} \label{eq:pw-scheme1}
\begin{array}{ll} 
\text{pw}(r; b, m, p)=\left\{
\begin{array}{ll}
     \mbox{sign}(r)  \times \text{uni}(\abs{r}; b-1, 0, p, 0), r \in R_1 \vspace{1mm}\\ 
     \mbox{sign}(r) \times \text{uni}(\abs{r}; b-1, p, m, p), r \in R_2 \\ 
\end{array} 
\right. ,
\end{array}
\end{equation}
where the sign of full-precision real number $r$ is denoted by $\mbox{sign}(r)$. The associated quantization error is defined as $\varepsilon_{pw} = \text{pw}(r; b, m, p) - r$.

Figure~\ref{fig:pw-uniform-compare} shows the comparison between uniform quantization and PWLQ on the empirical distribution of the $conv4$ layer weights in a pre-trained Inception-v3 model \cite{szegedy2016rethinking}. We emphasize that $b$-bit PWLQ represents FP32 values into $b$-bit integers to support $b$-bit multiply-accumulate operations, even though in total, it has the same number of quantization levels as $(b+1)$-bit uniform quantization. The implications of this are further discussed in Section \ref{S:HW_impact}.

\subsection{Error Analysis} \label{sec:error-analysis}

To study the quantization error for PWLQ, we suppose the full-precision real number $r$ has a symmetric probability density function (PDF) $f(r)$ on a bounded domain $[-m, m]$ with the cumulative distribution function (CDF) $F(r)$ satisfying $f(r) = f(-r)$ and $F(-m)=0$, $F(m)=1$. Then, we calculate the expected quantization error squared of PWLQ from \eqref{eq:uni-quant-err} based on the error of each piece:
\begin{equation} \label{eq:pw-variance}
\begin{array}{ll}
\mathbb{E}  (\varepsilon_{pw}^2; b, m, p) &= C(b-1) \Big\lbrace (m-p)^2  \big[F(-p) + 1 - F(p)\big] \\
& + \: p^2  [F(p) - F(-p)]   \Big\rbrace,
\end{array}
\end{equation}
Since $F(r) = 1 - F(-r)$ for a symmetric PDF, equation~\eqref{eq:pw-variance} can be simplified as:
\begin{equation} \label{eq:pw-variance2}
\begin{array}{ll}
\mathbb{E}  (\varepsilon_{pw}^2; b, m, p) = C(b-1) \Big\lbrace (m-p)^2  + m(2p-m) \big[2F(p) -1\big] \Big\rbrace.
\end{array}
\end{equation}

The performance of a quantized model with PWLQ scheme critically depends on the value of the breakpoint $p$. If $p=\frac{m}{2}$, then the PWLQ is essentially equivalent to uniform quantization, because the four pieces have equal quantization ranges and bit-widths. If $p<\frac{m}{2}$, the center region has a smaller range and greater precision than the tail region, as shown in the middle of Figure~\ref{fig:pw-uniform-compare}. Conversely, if $p>\frac{m}{2}$, the tail region has greater precision than the center region. To reduce the overall quantization error for bell-shaped distributions found in DNNs, we increase the precision in the center region and decrease it in the tail region. Thus, we limit the breakpoint to the range $0 < p < \frac{m}{2}$.

Accordingly, the optimal breakpoint $p^*$ can be estimated by minimizing the expected squared quantization error:
\begin{equation} \label{eq:p_optimize}
\begin{array}{ll}
p^* = \argmin_{p \in (0, \frac{m}{2})} {\mathbb{E}  (\varepsilon_{pw}^2; b, m, p)}.
\end{array}
\end{equation}
Since bell-shaped distributions tend to zero as $r$ becomes large, we consider a smooth $f(r)$ is decreasing when $r$ is positive, i.e., $f'(r) < 0$, $\forall r>0$. Then we prove that the optimization problem \eqref{eq:p_optimize} is convex with respect to the breakpoint $p \in (0, \frac{m}{2})$. Therefore one unique $p^*$ exists to minimize the quantization error \eqref{eq:pw-variance2}, as demonstrated by the following \textbf{Lemma~\ref{L:pw_convex}}.

\begin{lemm}\label{L:pw_convex}
If $f(-r) = f(r)$, $f'(r) < 0$ for all $r>0$, then $\mathbb{E}  (\varepsilon_{pw}^2; b, m, p)$ is a convex function of the breakpoint $p \in (0, \frac{m}{2})$.
\end{lemm}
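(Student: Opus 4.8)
The plan is to reduce convexity to a second-derivative sign test. Since the prefactor $C(b-1)$ in \eqref{eq:pw-variance2} is a positive constant, it suffices to show that the bracketed function
\[ g(p) = (m-p)^2 + m(2p-m)\big[2F(p)-1\big] \]
has nonnegative second derivative on the interval $(0, \tfrac{m}{2})$. I would establish this by differentiating $g$ twice with respect to $p$ and then checking the sign of the resulting expression term by term.

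First I would compute $g'(p)$, applying the product rule to the second summand and using $F'(p) = f(p)$, which yields
\[ g'(p) = -2(m-p) + 2m\big[2F(p)-1\big] + 2m(2p-m)f(p). \]
Differentiating a second time, the derivative of $-2(m-p)$ contributes $2$, the middle term contributes $4m f(p)$, and the last term (again by the product rule) contributes $4m f(p) + 2m(2p-m)f'(p)$. Collecting these gives
\[ g''(p) = 2 + 8m\,f(p) + 2m(2p-m)f'(p). \]

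The key step is the sign analysis on the restricted domain. The constant $2$ is positive, and $8m\,f(p) \ge 0$ because $m > 0$ and $f$ is a probability density. The only potentially negative contribution is $2m(2p-m)f'(p)$, and this is exactly where the two hypotheses combine: for $p \in (0, \tfrac{m}{2})$ we have $2p - m < 0$, while the monotonicity assumption gives $f'(p) < 0$ for $p > 0$, so their product $(2p-m)f'(p)$ is positive. Hence every term of $g''(p)$ is nonnegative, in fact $g''(p) > 0$, and $\mathbb{E}(\varepsilon_{pw}^2; b, m, p)$ is convex in $p$.

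I expect the main obstacle to be conceptual rather than computational: one must use the domain restriction $p < \tfrac{m}{2}$ crucially, since it is precisely what forces $2p - m < 0$ and thereby turns the sign of the $f'$ term in our favor. Without this restriction the factor $2p - m$ could be positive, the last term could become negative, and convexity would no longer be guaranteed. I would also note that the symmetry assumption $f(-r) = f(r)$ has already been exploited in deriving the simplified error \eqref{eq:pw-variance2}, so it does its work before the differentiation and need not be reinvoked.
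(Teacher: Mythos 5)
Your proof is correct and follows essentially the same route as the paper's: differentiate the simplified error \eqref{eq:pw-variance2} twice and observe that on $(0, \tfrac{m}{2})$ the term $m(2p-m)f'(p)$ is positive because both factors $2p-m$ and $f'(p)$ are negative, so the second derivative is strictly positive. Your computed $g''(p) = 2 + 8m f(p) + 2m(2p-m)f'(p)$ agrees exactly with the paper's \eqref{eq:pw-variance-second-derivative} after factoring out $C(b-1)$, and your remark that the symmetry hypothesis is consumed in deriving \eqref{eq:pw-variance2} is also consistent with the paper.
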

\begin{proof}
Taking the first and second derivatives of \eqref{eq:pw-variance2} yields: 
\begin{equation} \label{eq:pw-variance-first-derivative}
\begin{array}{ll}
\frac{\partial  \mathbb{E} (\varepsilon_{pw}^2; b, m, p)}{\partial p} = 2C(b-1) \Big[  p-2m  + 2mF(p) + m(2p-m)f(p) \Big],
\end{array}
\end{equation}
\begin{equation} \label{eq:pw-variance-second-derivative}
\begin{array}{ll}
\frac{\partial^2  \mathbb{E} (\varepsilon_{pw}^2; b, m, p)}{\partial p^2}  = 2C(b-1) \Big[ 1 + 4mf(p)  + m(2p-m)f'(p) \Big],
\end{array}
\end{equation}
Since $f'(p)<0$ and $p < \frac{m}{2}$, $m(2p-m)f'(p) > 0$, then $\frac{\partial^2  \mathbb{E} (\varepsilon_{pw}^2; b, m, p)}{\partial p^2} > 0$. Therefore, $\mathbb{E} (\varepsilon_{pw}^2; b,m,p)$ is convex w.r.t. $p$, and thus a unique $p^*$ exists. 
\end{proof}

In practice, we can find the optimal breakpoint by solving \eqref{eq:p_optimize} by assuming an underlying Gaussian or Laplacian distribution using gradient descent \cite{rumelhart1986learning}.
Once the optimal breakpoint $p^*$ is found, both~\textbf{Lemma~\ref{L:pw_minimal}} and the numerical simulation in the right of Figure~\ref{fig:pw-uniform-compare} show that PWLQ achieves a smaller quantization error than uniform quantization, which indicates its stronger representational power.

\begin{lemm}\label{L:pw_minimal}
$\mathbb{E} (\varepsilon_{pw}^2; b, m, p^*) < \frac{C(b-1)}{16 C(b)} \mathbb{E} (\varepsilon_{uni}^2; b, -m, m)$ for $b \geq 2$. 
\end{lemm}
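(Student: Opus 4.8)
The plan is to reduce the whole inequality to a comparison at the single point $p=\tfrac{m}{2}$ and then invoke convexity. First I would simplify the right-hand side: by \eqref{eq:uni-quant-err} the uniform error over $[-m,m]$ has range $\Delta = 2m$, so $\mathbb{E}(\varepsilon_{uni}^2; b, -m, m) = C(b)(2m)^2 = 4C(b)m^2$, and the target bound collapses to $\frac{C(b-1)}{16C(b)}\cdot 4C(b)m^2 = \frac{C(b-1)m^2}{4}$. The claim thus becomes $\mathbb{E}(\varepsilon_{pw}^2; b, m, p^*) < \frac{C(b-1)m^2}{4}$; the factor $16$ in the statement is merely the bookkeeping that cancels the $(2m)^2$ from the uniform range against the $\tfrac14$ that will emerge from $p=\tfrac m2$.

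Next I would evaluate the PWLQ error \eqref{eq:pw-variance2} at the uniform-equivalent breakpoint $p=\tfrac{m}{2}$. There $2p-m=0$ kills the term $m(2p-m)[2F(p)-1]$ and $(m-p)^2=\tfrac{m^2}{4}$, so $\mathbb{E}(\varepsilon_{pw}^2; b, m, \tfrac m2) = \frac{C(b-1)m^2}{4}$, which is exactly the target bound. Hence it suffices to show the minimizer $p^*$ yields a strictly smaller value than this endpoint. By the convexity established in Lemma~\ref{L:pw_convex}, this reduces to showing that $\mathbb{E}(\varepsilon_{pw}^2; b, m, p)$ is still strictly decreasing just below $p=\tfrac m2$, i.e. that its derivative at $p=\tfrac m2$ is strictly positive.

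I would compute this derivative from \eqref{eq:pw-variance-first-derivative}: at $p=\tfrac m2$ the $f(p)$ term again drops out, leaving $\frac{\partial}{\partial p}\mathbb{E}(\varepsilon_{pw}^2)\big|_{p=m/2} = 2C(b-1)\,m\big[\,2F(\tfrac m2) - \tfrac32\,\big]$. Positivity therefore reduces to the clean inequality $F(\tfrac m2) > \tfrac34$, and this is the crux where the hypotheses of the lemma enter. Symmetry gives $F(0)=\tfrac12$, and since $f$ is strictly decreasing on $(0,m)$ (from $f'<0$) the mass on the inner half-interval strictly exceeds that on the outer half, $\int_0^{m/2} f(r)\,dr > \int_{m/2}^{m} f(r)\,dr$; as these two integrals sum to $\tfrac12$, the first exceeds $\tfrac14$, whence $F(\tfrac m2) = \tfrac12 + \int_0^{m/2} f > \tfrac34$.

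Finally I would assemble the pieces: a strictly positive derivative at $p=\tfrac m2$ together with convexity (Lemma~\ref{L:pw_convex}) forces the unique minimizer to satisfy $p^* < \tfrac m2$, and makes the derivative positive throughout $(p^*,\tfrac m2)$, so $\mathbb{E}(\varepsilon_{pw}^2; b, m, p^*) < \mathbb{E}(\varepsilon_{pw}^2; b, m, \tfrac m2) = \frac{C(b-1)m^2}{4}$ for every $b\geq 2$ (here $b\geq 2$ only serves to keep $C(b-1)$ well defined and positive, so no sign issues arise). The genuinely delicate step is the density-monotonicity comparison yielding $F(\tfrac m2)>\tfrac34$; everything else is substitution at $p=\tfrac m2$ and an appeal to the convexity already proved.
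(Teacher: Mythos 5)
Your proof is correct, but it takes a genuinely different route from the paper's. The paper works at the stationary point: it sets the derivative \eqref{eq:pw-variance-first-derivative} to zero, obtaining the optimality condition $2mF(p^*) = 2m - p^* + m(m-2p^*)f(p^*)$, substitutes this back into \eqref{eq:pw-variance2} to eliminate $F(p^*)$, and then shows the difference from the bound collapses algebraically to $C(b-1)\bigl[-(p^*-\tfrac{m}{2})^2 - m(m-2p^*)^2 f(p^*)\bigr] < 0$. You instead recognize that the right-hand side $\tfrac{C(b-1)}{16C(b)}\,\mathbb{E}(\varepsilon_{uni}^2; b,-m,m) = \tfrac{C(b-1)m^2}{4}$ is exactly the PWLQ error at the uniform-equivalent breakpoint $p=\tfrac{m}{2}$ (echoing the paper's own remark in Section~\ref{S:PW_def}), and then show the minimizer strictly beats this endpoint by proving the derivative at $p=\tfrac{m}{2}$ is positive, which you reduce to the clean probabilistic fact $F(\tfrac{m}{2}) > \tfrac34$ derived from the decreasing-density hypothesis. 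Each approach buys something: the paper's substitution yields an explicit formula for the gap, quantifying the improvement in terms of how far $p^*$ sits from $\tfrac{m}{2}$ and the density there, and never needs any property of $F$ beyond the optimality condition; your route avoids invoking exact stationarity of $p^*$ (you only use that $p^*$ minimizes over the open interval, together with local decrease just below $\tfrac{m}{2}$), at the price of the extra lemma $F(\tfrac{m}{2})>\tfrac34$, which the paper never needs. Both arguments presuppose, via Lemma~\ref{L:pw_convex}, that the minimizer $p^*$ exists in $(0,\tfrac{m}{2})$; neither closes that existence point rigorously, though yours is marginally less exposed since it never differentiates at $p^*$ itself.
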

\begin{proof}
The $b$-bit uniform quantization error on $[-m, m]$ is calculated from \eqref{eq:uni-quant-err}:
\begin{equation} \label{eq:uni-quant-err-[-m,m]}
    \mathbb{E} (\varepsilon_{uni}^2; b, -m, m) =  C(b)(2m)^2 = 4 C(b) m^2 .
\end{equation}
For $b$-bit PWLQ, we solve the convex problem \eqref{eq:p_optimize} by letting the first derivative  equal to zero in \eqref{eq:pw-variance-first-derivative}, and determine that the optimal breakpoint $p^*$ satisfies:
\begin{equation} \label{eq:solve-optimal-break-point}
2mF(p^*) = 2m - p^* + m(m-2p^*)f(p^*). 
\end{equation}
By substituting~\eqref{eq:solve-optimal-break-point} in~\eqref{eq:pw-variance2} and simplifying, we obtain:
\begin{equation} \label{eq:pw-err-b}
\mathbb{E} (\varepsilon_{pw}^2; b, m, p^*) = C(b-1) \Big[ -(p^*)^2   + mp^* - m (m-2p^*)^2  f(p^*) \Big]. 
\end{equation}
Subtract the above from $\frac{C(b-1)}{16 C(b)}$ of \eqref{eq:uni-quant-err-[-m,m]}, we complete the proof:
\begin{equation} \label{eq:pw-err-compare}
\begin{array}{ll}
& \mathbb{E} (\varepsilon_{pw}^2; b, m, p^*) - \frac{C(b-1)}{16 C(b)} \mathbb{E} (\varepsilon_{uni}^2; b, -m, m) \vspace{1mm}\\
& = \mathbb{E} (\varepsilon_{pw}^2; b, m, p^*) - C(b-1) ( \frac{1}{4}m^2 ) \vspace{1mm} \\
& \leq C(b-1) \Big[ -(p^*-\frac{m}{2})^2- m (m-2p^*)^2  f(p^*) \Big] < 0.
\end{array}
\end{equation}
Note that $C(b) = \frac{1}{12 (2^b-1)^2}$ given from equation \eqref{eq:uni-quant-err}, for $b \geq 2$,
 \begin{equation}
      \frac{C(b-1)}{16 C(b)} = \frac{1}{16} \left(\frac{2^b-1}{2^{b-1}-1}\right)^2 = \frac{1}{16} \left(2 + \frac{1}{2^{b-1} - 1}\right)^{2} \leq \frac{9}{16}.
 \end{equation}
Therefore, $b$-bit PWLQ achieves a smaller quantization error, which is at most $\frac{9}{16}$ of $b$-bit uniform scheme. This improvement in performance requires only an extra bit for storage and no extra multiplication, as we discuss in the next section.
\end{proof}
\section{Hardware Impact}\label{S:HW_impact}

In this section, we discuss the hardware requirements for efficient deployment of DNNs quantized with PWLQ. In convolutional and fully-connected layers, every output can be computed using an inner product between vector $X$ and vector $W$, which correspond to the input activation and weight (sub)tensors respectively.

From scheme \eqref{eq:uniform-quant-scheme}, the approximated versions of uniform  quantization are $\hat{X} = s_x X_q + z_x I$ and $\hat{W} = s_w W_q $ (assuming symmetric quantization for weights), where $X_q$ and $W_q$ are quantized integer vectors from $X$ and $W$, $I$ is an identity vector, $s_x$, $s_w$ and $z_x$ are associated constant-valued scaling factors and offset, respectively. The output of this uniform quantization is:
\begin{equation} \label{eq:uni-comp}
\begin{array}{ll}
    \langle \hat{X}, \: \hat{W} \rangle &=  \langle s_x X_q + z_x I, \: s_w W_q  \rangle = C_0 \langle X_q, \: W_q \rangle  + C_1,
\end{array}
\end{equation}
where $\langle \cdot, \: \cdot \rangle$ is defined as vector inner product, $C_0 = s_x s_w$ and $C_1 = z_x s_w \langle W_q, \: I \rangle$ denote floating-point constant terms that can be pre-computed offline. 

Equation~\eqref{eq:uni-comp} implies that a uniformly quantized DNN requires two steps: (i) an integer-arithmetic (INT) inner product; and (ii) followed by a floating-point (FP) affine map. The expensive $O(\abs{W})$ (the size of vector $W$) FP operations $\langle \hat{X}, \: \hat{W} \rangle$ are then accelerated via INT operations $\langle X_q, \: W_q \rangle$, plus $O(1)$ FP re-scaling and adding operands using $C_0$ and $C_1$.

As we showed in Section~\ref{S:PW_def} when applying PWLQ on weights with one breakpoint, the algorithm breaks the ranges into non-overlapping regions ($R_1$ and $R_2$), which requires separate computational paths ($P_1$ and $P_2$) as each region has a different scaling factor. We set offsets $z_{w_1} = 0, z_{w_2} = p$ and denote scaling factors by $s_{w_1}, s_{w_2}$ in $R_1, R_2$, respectively. We also define by $\langle \cdot, \: \cdot \rangle _{R_i}$ the associated partial vector inner product, and $W_{q_i}$ the associated quantized integer vector of $W$ in region $R_i$ for $i=1,2$. Then $P_1$ is computed using the following equation:
\begin{equation} \label{eq:pw-comp-p1}
\begin{array}{ll}
    P_1 & = \langle s_x X_q + z_x I, \: s_{w_1} W_{q_1} \rangle_{R_1} 
     = C_2 \langle X_q, \: W_{q_1} \rangle_{R_1} + C_3.
\end{array}
\end{equation}
$P_2$ has additional terms as it has a non-zero offset $p$:
\begin{equation} \label{eq:pw-comp-p2}
\begin{array}{ll}
    P_2 & = \langle s_x X_q + z_x I, \: s_{w_2} W_{q_2} + p I \rangle_{R_2}  \vspace{1mm} \\
        & = C_4 \langle X_q, \: W_{q_2} \rangle_{R_2}  + C_5 \langle X_q, \: I \rangle_{R_2}  + C_6  ,
\end{array}
\end{equation}
 where $C_2$, $C_3$, $C_4$, $C_5$, and $C_6$  are constant terms, which can be pre-computed similar to $C_0$ and $C_1$ in \eqref{eq:uni-comp}.

As indicated by \eqref{eq:pw-comp-p1} and \eqref{eq:pw-comp-p2} for PWLQ compared to uniform quantization \eqref{eq:uni-comp}, the extra term $\langle X_q, \: I \rangle_{R_2}$ is needed due to the non-zero offset $p$, which sums up the activations corresponding to weights in $R_2$. Since most of the weights\footnote{Around 90\% of the weights are locating in the center region $R_1$ in our experiments.} are in $R_1$, these extra computations in $R_2$ rarely happen. In addition, FP re-scaling and adding are needed in each region, which also increases the overall FP operation overhead.

In short, an efficient hardware implementation of PWLQ requires: 
\begin{itemize}
    \item One multiplier for products in both of $\langle X_q, W_{q_1} \: \rangle_{R_1}$ and $\langle X_q, \: W_{q_2} \rangle_{R_2}$. \vspace{1mm}
    
    \item Three accumulators: one of each for sum of products in $P_1$ and $P_2$, and another one for activations in $P_2$. \vspace{1mm}
    
    \item At most one extra bit for storage\footnote{This extra storage cost can be further compressed by exploiting the non-uniform distribution of values \cite{bakunas2017efficient,park2018value}.}
    per weight value to indicate the region. Note that this extra bit does not increase the multiply-accumulate (MAC) computation and it is only used to determine the appropriate accumulator, which can be done in hardware at negligible cost on the MAC unit.
\end{itemize}

Based on the above explanation, it is clear that more breakpoints require more accumulators and more storage bits per weight tensor. Also, applying PWLQ on both weights and activations\footnote{Applying PWLQ on both weights and activations is discussed in the supplementary material.} requires accumulators for each combination of activation regions and weight regions, which translates to more hardware overhead. As a result, more than one breakpoint on the weight tensor or applying PWLQ on both weights and activations might not be feasible, from a hardware implementation perspective.
\section{Experiments}\label{S:experiments}

We evaluate the robustness of our proposed PWLQ scheme for post-training quantization on popular networks of several computer vision benchmarks: ImageNet classification \cite{russakovsky2015imagenet}, semantic segmentation and object detection on the Pascal VOC challenge \cite{everingham2010pascal}. 
In all experiments, we apply batch normalization folding \cite{jacob2018quantization} before quantization. 
For activations, we follow the profiling strategy in \cite{zhao2019improving} to sample from 512 training images, and collect the median\footnote{We test with the \textit{top-k} median and percentile-based \cite{li2019fully} approaches  and use the top-10 median method for better robustness of low-bit quantization. We refer to the supplementary material for details.} of the top-10 smallest and top-10 largest activation values for the minimum and maximum range boundaries at each layer, respectively.
During inference, we apply quantization after clipping with these ranges. Unless stated otherwise, we quantize all network weights \textit{per-channel} into 3-to-8 bits; and \textit{uniformly} quantize activations as well as pooling layers \textit{per-layer} into 8-bit. We perform all experiments in Pytorch 1.2.0 \cite{paszke2017automatic}.

\subsection{Ablation Study on ImageNet} \label{sec:ablation-study}

In this section, we conduct experiments on the ImageNet classification challenge \cite{russakovsky2015imagenet} and investigate the effectiveness of our proposed PWLQ method. We evaluate the top-1 accuracy performance on the validation dataset for three popular network architectures: Inception-v3 \cite{szegedy2016rethinking}, ResNet-50 \cite{he2016deep} and MobileNet-v2 \cite{sandler2018mobilenetv2}. We use torchvision\footnote{\url{https://pytorch.org/docs/stable/torchvision}} 0.4.0 and its pre-trained models for our experiments.

\subsubsection{Optimal Breakpoint Selection.} \label{sec:numerical-optimal-bkp}

In order to apply PWLQ, we first need to find the \textit{optimal breakpoints} to divide the quantization ranges into \textit{non-overlapping regions}. As stated in Section \ref{sec:error-analysis}, we assume weights and activations satisfy Gaussian or Laplacian distributions, then we find the optimal breakpoints by solving the optimization problem \eqref{eq:p_optimize}.

For the case of one optimal breakpoint $p^*$, we can iteratively find it by gradient descent since \eqref{eq:p_optimize} is convex; or using a simple and fast approximation of $p^*/m = \ln(0.8614 m + 0.6079)$ for normalized Gaussian. Experimental results show that the approximation obtains almost the same accuracy compared to gradient descent, while also being considerably faster. Therefore, unless stated otherwise we use this approximated version of the optimal breakpoint for the rest of this paper. We report results with other assumptions such as Laplacian distributions in the supplementary material.

Other works treat the data distributions differently: BiScaled-DNN~\cite{jain2019biscaled} proposes a ratio heuristic to divide the data into two overlapping regions; and V-Quant~\cite{park2018value} introduces a value-aware method to split them into two non-overlapping regions, e.g, 2\% (98\%) of large (small) values located in the tail (center) region, respectively. 
Our implementation results in Figure \ref{fig:breakpoint-and-non-overlap} (left) show that PWLQ with non-overlapping regions achieves a superior performance on low-bit quantization compared to BiScaled-DNN improved version\footnote{We improved the original BiScaled-DNN \cite{jain2019biscaled} by applying affine-based uniform scheme \eqref{eq:uniform-quant-scheme} on each region and per-channel quantization.} (denoted by BSD+) and V-Quant, especially with a large margin on 4-bit MobileNet-v2. Non-overlapping approach shortens the quantization ranges ($\Delta$ in \eqref{eq:uni-quant-err}) for the tail regions by $1.25\times$ to $2\times$. 
Therefore, both our choices of \textit{non-overlapping regions} and \textit{optimal breakpoints} have a significant impact on reducing the quantization error and improving the performance of low-bit quantized models.

\begin{figure}
\begin{center}
    \includegraphics[width=0.9\linewidth]{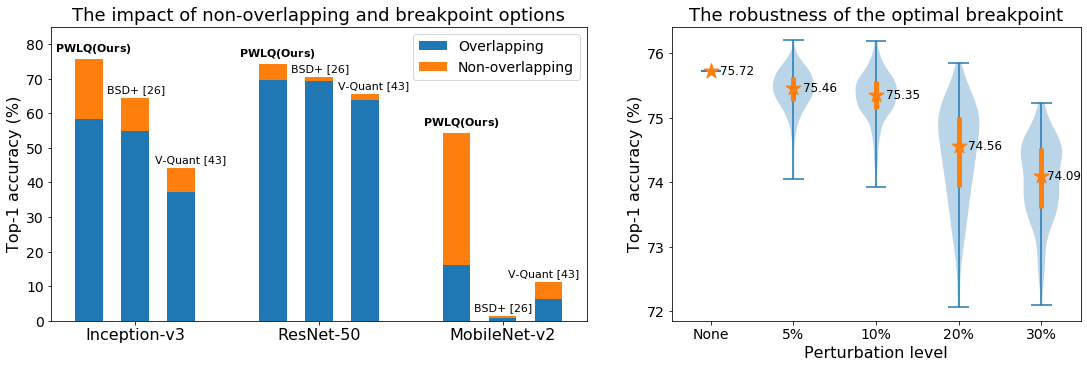} 
\end{center}
   \caption{Left: the impact of non-overlapping and breakpoint options on the top-1 accuracy for 4-bit post-training quantization models. Right: the robustness of the optimal breakpoint found by solving \eqref{eq:p_optimize} with some perturbation levels from 5\% to 30\% for 4-bit Inception-v3 (full-precision accuracy 77.49\%). Each perturbation level is run with 100 random samples, the star and the associated number indicate the median accuracy, the bold bar displays the accuracy range between the 25th and 75th percentiles  
}
\label{fig:breakpoint-and-non-overlap}
\end{figure}

In Figure \ref{fig:breakpoint-and-non-overlap} (right), we explore the robustness of the optimal breakpoint found by minimizing the quantization error in \eqref{eq:p_optimize} for 4-bit Inception-v3. We randomly add perturbation levels from 5\% to 30\% on each optimal breakpoint $p^*$ per-channel per-layer, e.g., the new breakpoint $\widehat{p^*} = 0.95 p^*$ or $1.05 p^*$ for 5\% of perturbation. We run 100 random samples for each perturbation level to generate the results. 
Overall, model performance decreases as the perturbation level increases, which indicates that our selection of the optimal breakpoint is crucial for accurate post-training quantization. Note that when 5\% of perturbation is added to our selection of optimal breakpoints, more than half of the experiments produce a lower accuracy, and can be as low as 74.05\%, which is a 1.67\% drop from the zero-perturbation baseline.

\subsubsection{Multiple Breakpoints.}

In this section, we discuss the trade-off of multiple breakpoints on model accuracy and hardware overhead. Theoretically, as the number of breakpoints on weights increases, the associated hardware cost linearly rises. Meanwhile, the number of non-overlapping regions and the associated total number of quantization levels grows, indicating a stronger representational power. Numerically, the extension of finding the optimal multi-breakpoints is straightforward by calculating the same quantization error \eqref{eq:pw-variance}, and solving the same optimization problem \eqref{eq:p_optimize} with gradient descent in an enlarged search space. 
Table \ref{tab:multiple-breakpoints} shows the accuracy performance up to three breakpoints. In general, using more breakpoints consistently improves model accuracy under the growing support of customized hardware. We suggest using one breakpoint to maintain the simplicity of the inference algorithm and its hardware implementation. Thus we only report PWLQ with one breakpoint for the rest of this paper.

 \begin{table}
\caption{Top-1 accuracy (\%) and requirement of hardware accumulators for PWLQ with multiple breakpoints on weights} 
\centering 
\resizebox{\columnwidth}{!}
{ 
{
\renewcommand{\arraystretch}{1.25}%
\begin{tabular}{c|c|ccc|ccc|ccc}
\toprule
\multirow{2}{*}{\begin{tabular}[c]{@{}c@{}}Number of\\\:Breakpoints\:\end{tabular}} &
\multirow{2}{*}{\begin{tabular}[c]{@{}c@{}}Hardware\\\:Accumulators\:\end{tabular}} &
\multicolumn{3}{c|}{Inception-v3 (77.49)} & \multicolumn{3}{c|}{ResNet-50 (76.13)} & \multicolumn{3}{c}{MobileNet-v2 (71.88)}\\ 
& & 5-bit & 4-bit & 3-bit & 5-bit & 4-bit & 3-bit & 5-bit & 4-bit & 3-bit   \\ 
\midrule
One   & \textbf{Three} & \:\:77.28\:\: & \:\:75.72\:\: & \:\:61.76\:\: & \:\:75.62\:\: & \:\:74.28\:\: & \:\:67.30\:\: & \:\:69.05\:\: & \:\:54.34\:\: & \:\:16.77\:\:  \\ 
Two   & Five & 77.31 & 76.73 & 71.40 & 75.94 & 75.24 & 73.27 & 70.01 & 65.74 & 36.44  \\ 
Three & Seven & \textbf{77.46} & \textbf{77.00} & \textbf{74.07} & \textbf{76.06} & \textbf{75.77} & \textbf{73.84} & \textbf{70.43} & \textbf{67.71} & \textbf{55.17}  \\ 
\bottomrule
\end{tabular}
}
}
\label{tab:multiple-breakpoints}
\end{table}

\subsubsection{PWLQ and Uniform Quantization.}

In Section \ref{sec:error-analysis}, we analytically and numerically demonstrate that our method, PWLQ, obtains a smaller quantization error than uniform quantization. We compare these two schemes in Table \ref{tab:pw-vs-uniform}. In this table, weights are quantized per-channel with the same computational bit-width $b=4, 6, 8$; activations are uniformly quantized per-layer into 8-bit. Generally, PWLQ achieves higher accuracy than uniform quantization except for one minor case of 8-bit Inception-v3. When the bit-width is large enough ($b=8$), the quantization error is small and both uniform quantization and PWLQ provide good accuracy. However, when the bit-width is decreased to 4, PWLQ obtains a notably higher accuracy, i.e., PWLQ attains 75.72\% but uniform quantization only attains 44.28\% for 4-bit Inception-v3. These results show that PWLQ is a more powerful representation scheme in terms of both quantization error and model accuracy, making it a viable alternative for uniform quantization in low bit-width cases. Moreover, PWLQ applies uniform quantization on each piece, hence it features a simple computational scheme and can benefit from any tricks that improve uniform quantization performance such as bias correction.

\begin{table}
\centering 
\caption{Comparison results of top-1 accuracy (\%) for uniform and PWLQ schemes on weights. $b$+BC: $b$-bit with bias correction for bit-width $b=4,6,8$. Each bold value indicates the best result from different methods for specified bit-width and network} 
\resizebox{\columnwidth}{!}
{
{
\renewcommand{\arraystretch}{1.25}%
\begin{tabular}{c|c|cc|cc|cc}
\toprule
Network                   
& Weight Bit-width & 8-bit & 8+BC & 6-bit & 6+BC & 4-bit & 4+BC \\
\midrule
\multirow{2}{*}{\begin{tabular}[c]{@{}c@{}}\:\:Inception-v3\:\:\\ (77.49)\end{tabular}} 
& Uniform    & 77.53  & 77.52  & 76.87  & 77.24  & 44.28  & 62.46 \\ 
&\:\: \textbf{PWLQ (Ours)}\:\:    & \:\:77.52\:\:  & \:\:\textbf{77.53}\:\: & \:\:77.42\:\: & \:\:\textbf{77.48}\:\: & \:\:75.72\:\: &  \:\:\textbf{76.45}\:\:   \\ 
\midrule 
\multirow{2}{*}{\begin{tabular}[c]{@{}c@{}}ResNet-50\\ (76.13)\end{tabular}}   
& Uniform    & 76.10  & \textbf{76.14} & 75.61  & 75.92  & 65.48 & 72.45\\ 
& \textbf{PWLQ (Ours)}    & 76.10 & 76.10   & 76.03   & \textbf{76.08} & 74.28 & \textbf{75.62} \\
\midrule
\multirow{2}{*}{\begin{tabular}[c]{@{}c@{}}MobileNet-v2\\ (71.88)\end{tabular}}    
& Uniform    & 71.35  & 71.58  & 67.76  & 70.81   & 11.37 & 41.80\\ 
& \textbf{PWLQ (Ours)}    & 71.59  & \textbf{71.73}  & 70.82  & \textbf{71.58}   & 54.34  & \textbf{69.22}           \\ 
\bottomrule
\end{tabular}
}
}
\label{tab:pw-vs-uniform}
\end{table}

\subsubsection{Bias Correction.}
An inherent bias in the mean and variance of the tensor values was observed after the quantization process and the benefits of correcting this bias term have been demonstrated in \cite{banner2018post,finkelstein2019fighting,nagel2019data}. This bias can be compensated by folding certain correction terms into the scale and the offset \cite{banner2018post}. We adopt this idea into our PWLQ method and show the results in Table \ref{tab:pw-vs-uniform} (columns with ``+BC"). Applying bias correction further improves the performance of low-bit quantized models. It allows 6-bit post-training quantization with piecewise linear scheme for all three networks to achieve near full-precision accuracy within a drop of 0.30\%; 4-bit MobileNet-v2, also without retraining, achieves an accuracy of 69.22\%. In general, a combination of low-bit PWLQ and bias correction on weights achieves minimal loss of full-precision model performance.

\subsection{Comparison to Existing Approaches}

In this section, we compare our PWLQ method with other existing approaches, by quoting the reported performance scores from the original literature.

An inclusive evaluation of clipping techniques along with outlier channel splitting (OCS) was presented in \cite{zhao2019improving}. To fairly compare with these methods, we adopt the same setup of applying per-layer quantization on weights and without quantizing the first layer. In Table \ref{tab:compare-pw-vs-clip-ocs}, we show that our PWLQ (no bias correction) outperforms the best results of clipping method combined with OCS. Besides, OCS needs to change the network architecture, in contrast to PWLQ.

\begin{table}
\centering
\caption{Comparison results of per-layer PWLQ and best clipping with OCS \cite{zhao2019improving} on top-1 accuracy (\%) loss. W/A indicate the bit-width on weights/activations. The accuracy difference values are measured from the full-precision (32/32) result}
\resizebox{\textwidth}{!}
{ 
\renewcommand{\arraystretch}{1.25}%
\begin{tabular}{c|ccccccc}
\toprule
Network                   
& W/A & 32/32 & 8/8 & 7/8 & 6/8 & 5/8 & 4/8 \\
\midrule
\multirow{2}{*}{\begin{tabular}[c]{@{}c@{}}Inception-v3\:\end{tabular}} 
& \:OCS + Best Clip\: & 75.9 & -0.6 (75.3)  & -1.2 (74.7) & -3.4 (72.5)  & -13.0 (62.9) & -71.1 (4.8)  \\ 
& \textbf{PWLQ (Ours)}   & 77.5 & \:\:\textbf{+0.1 (77.6)}\:\: & \:\:\textbf{-0.1 (77.4)}\:\: & \:\:\textbf{-0.3 (77.2)}\:\: & \:\:\textbf{-2.0 (75.5)}\:\: & \:\:\textbf{-12.8 (64.7)} \\
\midrule
\multirow{2}{*}{\begin{tabular}[c]{@{}c@{}}ResNet-50\end{tabular}}
& OCS + Best Clip & 76.1 & -0.4 (75.7) & -0.5 (75.6) & -0.9 (75.2)  & -2.7 (73.4) &  -6.8 (69.3)  \\ 
& \textbf{PWLQ (Ours)}   & 76.1 & \textbf{-0.0 (76.1)} & \textbf{-0.1 (76.0)} & \textbf{-0.2 (75.9)} & \textbf{-0.7 (75.5)} & \textbf{-2.4 (73.7)} \\
\bottomrule
\end{tabular}
}
\label{tab:compare-pw-vs-clip-ocs} 
\end{table}

In Table \ref{tab:compare-pw-with-all-others}, we provide a comprehensive comparison result of our PWLQ to other existing quantization methods. Here we apply per-layer quantization on activations and per-channel PWLQ on weights with bias correction. Except for the 4/4 case where we apply 4-bit PWLQ on activations, we always apply 8-bit uniform quantization on activations for the rest of the 8/8 and 4/8 cases. Under the same bit-width of computational cost among all the methods, our PWLQ combined with bias correction achieves the state-of-the-art results on all cases and it outperforms all other methods with a large margin on 4/8 and 4/4 cases. We emphasize that our PWLQ method is simple and efficient. It achieves the desired accuracy at the small cost of a few more accumulations per MAC unit and a minor overhead of storage. More importantly, it is orthogonal and applicable to other methods.

\begin{table}
\caption{Comparison of our PWLQ and other methods on top-1 accuracy (\%) loss. PWLQ: weights are piecewise linearly quantized per-channel with bias correction, activations are quantized per-layer}
\centering 
\resizebox{\textwidth}{!}
{
{
\renewcommand{\arraystretch}{1.25}%
\begin{tabular}{c|cccccccccc}
\toprule
Network                   
& \:\:W/A\:\:   & \textbf{PWLQ (Ours)} \:\:    & QWP \cite{krishnamoorthi2018quantizing}         & ACIQ \cite{banner2018post}        & LBQ  \cite{choukroun2019low}         & SSBD  \cite{meller2019same}      & QRD  \cite{lee2018quantization}  & UNIQ \cite{baskin2018uniq} & DFQ \cite{nagel2019data} \\ \midrule
\multirow{4}{*}{\begin{tabular}[c]{@{}c@{}}Inception-v3\\(Top1\%)\end{tabular}} 
& 32/32 & 77.49    & 78.00   & 77.20   & 76.23   & 77.90  & 77.97  & - & -  \\ \cmidrule(lr{1em}){2-10}
& 8/8   & \textbf{+0.04 (77.53) } & 0.00 (78.00)  & -   & -   & -0.03 (77.87)  & -0.09 (77.88)  & - & - \\
& 4/8   & \textbf{-1.04 (76.45)} & -7.00 (71.00) & -9.00 (68.20) & -1.44 (74.79) & -   & -  & - & - \\
& 4/4   & \textbf{-2.58 (74.91)} & -   & -10.80 (66.40) & -4.62 (71.61) & -     & -     & -     & -  \\
\midrule
\multirow{4}{*}{\begin{tabular}[c]{@{}c@{}}ResNet-50\\(Top1\%)\end{tabular}} 
& 32/32 &76.13 & 75.20 & 76.10 & 76.01 & 75.20  & - & 76.02 & - \\ \cmidrule(lr{1em}){2-10}
& 8/8   & \textbf{-0.03 (76.10)}  & -0.10 (75.10) & - & - & -0.25 (74.95) & - & - & - \\
& 4/8   & \textbf{-0.51 (75.62)} & -21.20 (54.00) & -0.80 (75.30) & -1.03 (74.98) & - & -  & -2.56 (73.37) & - \\
& 4/4   & \textbf{-1.28 (74.85) } & -  & -2.30 (73.80) & -3.41 (72.60) & - & - & - & - \\ \midrule
\multirow{3}{*}{\begin{tabular}[c]{@{}c@{}}MobileNet-v2\:\\(Top1\%)\end{tabular}} 
& 32/32 & 71.88 & 71.90 & - & - & 71.80 & 71.23  & - & 71.72 \\
\cmidrule(lr{1em}){2-10}
& 8/8   & \textbf{-0.15 (71.73)} & -2.10 (69.80) & - & - & -0.61 (71.19) & -1.68 (69.55)  & -  & -0.53 (71.19)\\
& 4/8   & \textbf{-2.68 (69.22)} & -71.80 (0.10) & - & - & - & - & - & - \\
\bottomrule
\end{tabular}
}
}
\label{tab:compare-pw-with-all-others}
\end{table}

\subsection{Other Applications}

To show the robustness and applicability of our proposed approach, we extend the PWLQ idea to other computer vision tasks including semantic segmentation on DeepLab-v3+ \cite{chen2018encoder} and object detection on SSD \cite{liu2016ssd}.

\subsubsection{Semantic Segmentation.}

In this section, we apply PWLQ on DeepLab-v3+ with a backbone of MobileNet-v2. The performance is evaluated using mean intersection over union (mIoU) on the Pascal VOC segmentation challenge \cite{everingham2010pascal}. 

In our experiments, we utilize the implementation of public Pytorch repository\footnote{\url{https://github.com/jfzhang95/pytorch-deeplab-xception}} to evaluate the performance. 
After folding batch normalization of the pre-trained model into the weights, we found that several layers of weight ranges become very large (e.g., [-54.4, 64.4]). Considering the fact that quantization range \cite{jung2019learning}, especially in the early layers \cite{choukroun2019low}, has a profound impact on the performance of quantized models, we fix the configuration of some early layers in the backbone. More precisely, we apply 8-bit PWLQ on three depth-wise convolution layers with large ranges in all configurations shown in Table \ref{tab:deep-lab}. Note that the MAC operations of these three layers are negligible in practice since they only contribute 0.2\% of the entire network computation, but it is remarkably beneficial to the performance of low-bit quantized models.

\begin{table}
\caption{Uniform quantization and PWLQ on DeepLab-v3+. Weights are quantized per-channel with bias correction, activations are uniformly quantized per-layer}
\centering 
\resizebox{\columnwidth}{!}
{ 
{
\renewcommand{\arraystretch}{1.25}%
\begin{tabular}{c|ccccc}
\toprule
Network                                                                                   
& W/A  & 32/32  & 8/8          & 6/8          & 4/8              \\ \midrule
\multirow{3}{*}{\begin{tabular}[c]{@{}c@{}}\:DeepLab-v3+\:\\ (mIoU\%)\end{tabular}} 
& Uniform   & 70.81  & -0.65 (70.16)  & -1.54 (69.27)  &     -20.76 (50.05) \\ 
& \:\:\textbf{PWLQ (Ours)}\:\:   & \:\:70.81\:\:  & \:\:\textbf{-0.12 (70.69)}\:\:    & \:\:\textbf{-0.42 (70.39)}\:\: & \:\:\textbf{-3.15 (67.66)}\:\: \\ 
& \:\:DFQ \cite{nagel2019data}\:\:   & 72.94  & -0.61 (72.33) & - & -  \\ \bottomrule
\end{tabular}
}
}
\label{tab:deep-lab}
\end{table}

As noticed in classification, low-bit uniform quantization causes significant accuracy drop from the full-precision models. In Table \ref{tab:deep-lab}, applying the piecewise linear method combined with bias correction, the 6-bit PWLQ model on weights even outperforms 8-bit DFQ \cite{nagel2019data}, which attains 0.42\% degradation of the pre-trained model. Moreover, the 4-bit PWLQ significantly improves the mIoU by 17.61\% from the 4-bit uniform quantized model, indicating the potential of low-bit post-training quantization via piecewise linear approximation for the semantic segmentation task.

\subsubsection{Object Detection.}

We also test the proposed PWLQ for the object detection task. The experiments are performed on the public Pytorch implementation\footnote{\url{https://github.com/qfgaohao/pytorch-ssd}} of SSD-Lite version \cite{liu2016ssd} with a backbone of MobileNet-v2. The performance is evaluated with mean average precision (mAP) on the Pascal VOC object detection challenge \cite{everingham2010pascal}. 

Table \ref{tab:ssd-lite} compares the results of the mAP score of quantized models using the uniform and PWLQ schemes. Similar to image classification and semantic segmentation tasks, even with bias correction and per-channel quantization enhancements, 4-bit uniform scheme causes 3.91\% performance drop from the full-precision model, while 4-bit PWLQ with these two enhancements is able to remove this notable gap down to 0.38\%.

\begin{table}
\caption{Uniform quantization and PWLQ of SSD-Lite version. Weights are quantized per-channel with bias correction, activations are uniformly quantized per-layer } 
\centering 
\resizebox{\columnwidth}{!}
{ 
{
\renewcommand{\arraystretch}{1.25}%
\begin{tabular}{c|ccccccc}
\toprule
Network                                                                                   
& W/A  & 32/32  & 8/8          & 6/8          & 4/8              \\ \midrule
\multirow{3}{*}{\begin{tabular}[c]{@{}c@{}}\:\:SSD-Lite\:\:\\ (mAP\%)\end{tabular}} 
& Uniform   & 68.70  & -0.20 (68.50)  & -0.43 (68.37)  & -3.91 (64.79)  \\ 
& \:\:\textbf{PWLQ (Ours)} \:\:   & \:\:68.70\:\:  &  \:\:\textbf{-0.19 (68.51)}\:\:  & \:\:\textbf{-0.28 (68.42)}\:\: & \:\:\textbf{-0.38 (68.32)}\:\: \\ 
& \:\:DFQ \cite{nagel2019data}\:\:   & 68.47 & -0.56 (67.91) & - & -  \\ \bottomrule
\end{tabular}
}
}
\label{tab:ssd-lite}
\end{table}

\section{Conclusion}\label{S:conclusion}

In this work, we present a piecewise linear quantization scheme for accurate post-training quantization of deep neural networks. It breaks the bell-shaped distributed values into non-overlapping regions per tensor where each region is assigned an equal number of quantization levels. We further analyze the resulting quantization error as well as the hardware requirements. We show that our approach achieves state-of-the-art low-bit post-training quantization performance on image classification, semantic segmentation, and object detection tasks under the same computational cost. It indicates its potential of efficient and rapid deployment of computer vision applications on resource-limited devices.


\bigskip
\noindent\textbf{Acknowledgements.}
We would like to thank Hui Chen and Jong Hoon Shin for valuable discussions.

\clearpage

%
%
\bibliographystyle{splncs04}
\bibliography{references}

\end{document}